\documentclass[12pt,reqno]{amsart}
\usepackage{amsfonts}
\usepackage{amsmath}
\usepackage{amsthm}
\usepackage{mathrsfs}
\usepackage{amssymb}
\usepackage{stmaryrd}
\usepackage{graphicx}
\usepackage{epstopdf}
\usepackage{textcomp}
\usepackage[margin=1in]{geometry}

\newtheorem{theorem}{Theorem}
\newtheorem{lemma}{Lemma}

\theoremstyle{definition}

\begin{document}
\title[Robbins-Monro conditions for persistent exploration learning strategies]{Robbins-Monro conditions for persistent exploration learning strategies}

\author{Dmitry B. Rokhlin}

\address{Institute of Mathematics, Mechanics and Computer Sciences,
              Southern Federal University,
Mil'chakova str., 8a, 344090, Rostov-on-Don, Russia}
\email[Dmitry B. Rokhlin]{rokhlin@math.rsu.ru}


\begin{abstract}
We formulate simple assumptions, implying the Rob\-bins-Mon\-ro conditions for the $Q$-learning algorithm with the local learning rate, depending on the number of visits of a particular state-action pair (local clock) and the number of iteration (global clock). It is assumed that the Markov decision process is communicating and the learning policy ensures the persistent exploration. The restrictions are imposed on the functional dependence of the learning rate on the local and global clocks. The result partially confirms the conjecture of Bradkte (1994).
 
\end{abstract}
\subjclass[2010]{93E35, 62L20}

\keywords{Robbins-Mobro conditions, reinforcement learning, learning rate, learning strategy, persistent expolration, communicating Markov decision processes}

\maketitle

\section{Problem description}
\label{sec:1}
Reinforcement learning is aimed at the solution of the Markov decision problems without the exact knowledge of an underlying model.  In this paper we address only the case of finite state-action Markov decision processes (MDP). Moreover, for concreteness we discuss only the discounted optimality criterion and the $Q$-learning algorithm. However, this is not essential since we consider only the Robbins-Monro conditions for the learning rates, and not the convergence of the algorithms. So, the result is applicable to other reinforcement learning algorithms, based on asynchronous stochastic approximation.

The $Q$-learning can be regarded as an asynchronous version of the classical value iteration algorithm for the $Q$-function. Recall that a $Q$-function $Q(x,a)$ is the optimal gain for fixed initial state $x$ and initial action $a$. The $Q$-learning algorithm updates the current approximation $Q_t$ to $Q$ along a trajectory $(x_t,a_t)$ of states $x_t$ and actions $a_t$, generated by selected learning (or exploration) strategy.

A learning strategy is a sequence of probability distributions $\pi_t(a)$ on the action set $A$ (we assume that $A$ does not depend on $x$). As e.g. in \cite{singh2000}, we distinguish between persistent exploration and decaying exploration learning strategies. Persistent exploration (in contrast to the decaying one) means the existence of a uniform lower bound of the form $\pi_t(a)\ge c>0$.

Besides the learning strategy, a particular instance of the $Q$-learning algorithm is determined by a learning rate $\gamma_t(x,a)$ which controls the influence of the new information on the update rule. Usually the learning rate is of the form
\begin{equation} \label{1.0}
\gamma_t(x,a)=\alpha_t I_{\{x_t=x,a_t=a\}}.
\end{equation}
The sequence $(\alpha_t)$ will be also called a learning rate.
The standard results assert the pointwise convergence $Q_t\to Q$ with probability 1 under the Robbins-Monro conditions (see Theorem \ref{th:1}):
\begin{equation} \label{1.1}
\sum_{t=0}^\infty \gamma_t=\infty,\quad \sum_{t=0}^\infty\gamma_t^2<\infty.
\end{equation}

Clearly, it is required that each state-action pair $(x,a)$ is visited infinitely often. Assuming that this property is satisfied, it is easy to construct a sequence $(\alpha_t)$ depending on a ``local clock'' and verifying (\ref{1.1}). By a local clock we mean the number of visits of a particular point $(x,a)$ by the sequence $(x_t,a_t)$. Indeed, consider a function $\varphi:\mathbb Z_+\mapsto(0,\infty)$ satisfying the Robbins-Monro conditions, that is,
$$\sum_{t=1}^\infty \frac{1}{\varphi(t)}=\infty,\qquad \sum_{t=1}^\infty \frac{1}{\varphi^2(t)}<\infty.$$
Put $\alpha_t=1/\varphi(n_t(x,a))$, where 
\begin{equation} \label{1.3}
n_t(x,a)=\sum_{k=0}^t I_{\{x_k=x,a_k=a\}}
\end{equation}
is the number of visits of $(x,a)$ by the sequence $(x_k,a_k)_{k=0}^t$, and denote by $t_j(x,a)$ the time of $j$-th visit, $j\ge 1$. Then $n_{t_j}(x,a)=j$ and
$$\sum_{t=0}^\infty \gamma_t=\sum_{j=1}^\infty\alpha_{t_j}=\sum_{j=1}^\infty\frac{1}{\varphi(n_{t_j}(x,a))}= \sum_{j=1}^\infty\frac{1}{\varphi(j)}=\infty.$$
Similarly,
$$\sum_{t=0}^\infty \gamma_t=\sum_{j=1}^\infty\frac{1}{\varphi^2(j)}<\infty.$$

If the learning rate $\alpha_t$ explicitly depends on the ``global clock'', that is, the iteration number $t$, then the situation becomes more difficult. Let $\alpha_t$ be a deterministic sequence. In his PhD thesis Bradtke (\cite{bradtke1994}, see also \cite{bradtke1996}) in somewhat different situation, involving function approximation, conjectured that if $(\alpha_t)$ satisfies the Robbins-Monro conditions:
$$\sum_{t=0}^\infty \alpha_t=\infty,\quad \sum_{t=0}^\infty\alpha_t^2<\infty,$$
then the same is true for $\gamma_t$. In \cite{szepesvari1996} it was mentioned that this conjecture is true if the inter-arrival times $t_{j+1}-t_j$ have a common upper bound or, more specifically, are eventually exponentially distributed with common parameters. However, these conditions are difficult to verify and they depend on the learning strategy.

In this note we show that the Bradtke conjecture holds true for persistent exploration learning strategies. This assertion follows from the main result: Theorem \ref{th:2}. 

\section{Markov decision processes and $Q$-learning} 
\label{sec:2}
Let $X$ and $A$ be finite state and action spaces. Consider the canonical space $\Omega=(X\times A)^\infty$ with the $\sigma$-algebra $\mathscr F$ generated by projections 
$$(x_0,a_0,x_1,a_1,\dots)\mapsto (x_t,a_t).$$
Denote by $\mathscr F_t=\sigma(x_0,a_0,\dots,s_t,a_t)$ the natural filtration of the coordinate process. The probabilistic structure of the process $(x_t,a_t)$ is determined by a fixed transition kernel $q(y|x,a)$:
$$ \sum_{y\in X} q(y|x,a)=1,\qquad q(y|x,a)\ge 0$$
and a control (or learning) strategy, which is a sequence $\pi=(\pi_t)$ of probability distributions on the action set $A$. These objects uniquely determine a unique probability measure $\mathsf P_{z,\pi}$ on $\Omega$ such that
\begin{align*}
&\mathsf P_{z,\pi}(x_{t+1}=y|\mathscr F_t,a_t)=q(y|x_t,a_t),\quad \mathsf P_{z,\pi}(a_t=a|\mathscr F_{t-1},x_t)=\pi_t(a),\\
&\mathsf P_{z,\pi}(x_0=z)=1
\end{align*}
(see, e.g., \cite{lerma1996}). Note, that $\pi_t(a)$ is $\sigma(\mathscr F_{t-1},x_t)$-measurable.

Given a reward function $r(x,a,y)$ and a discounting factor $\beta\in [0,1)$, the total discounted gain is defined by the value function
$$ V(z)=\sup_{\pi}\mathsf E_{z,\pi}\sum_{t=0}^\infty \beta^t r(x_t,a_t,x_{t+1}),$$
where $\mathsf E_{z,\pi}$ is the expectation with respect to $\mathsf P_{z,\pi}$. As is well known, this function is a unique solution of the Bellman (or dynamic programming) equation:
$$ V(x)=\max_{a\in A}\sum_{y\in X} q(y|x,a)(r(x,a,y)+\beta V(y)).$$
The $Q$-function is the total discounted gain for fixed initial state and initial action:
$$ Q(x,a)=\sum_{y\in X} q(y|x,a)(r(x,a,y)+\beta V(y)).$$
This function is a unique solution of the equation
$$ Q(x,a)=\sum_{y\in X} q(y|x,a)(r(x,a,y)+\beta \max_{a\in A} Q(y,a)).$$

The $Q$-learning algorithm proposed in \cite{watkins1989} recursively defines the sequence $Q_t$:
\begin{align*}
Q_{t+1}(x,a)&=(1-\alpha_t I_{\{x_t=x,a_t=a\}}) Q_t(x,a)\label{1.4}\\
&+\alpha_t I_{\{x_t=x,a_t=a\}}(r(x_t,a_t,x_{t+1})+\beta\max_{a'\in A}Q_t(x_{t+1},a'))\nonumber 
\end{align*}
for a strictly positive $\mathscr F_t$-measurable random variables $\alpha_t$ and an arbitrary initial guess $Q_0(x,a)$.

Let us recall a basic result on the convergence of $Q_t$ to $Q$ with  probability $1$: see \cite{jaakkola1994,tsitsiklis1994}.
\begin{theorem} \label{th:1}
Assume that the learning rate $\alpha_t$ satisfies the Robbins-Monro conditions
\begin{equation} \label{2.1}
\sum_{t=0}^\infty \alpha_t I_{\{x_t=x,a_t=a\}}=\infty,\quad \sum_{t=0}^\infty \alpha_t^2 I_{\{x_t=x,a_t=a\}}<\infty\quad \mathsf P_{z,\pi}\textrm{-a.s.}
\end{equation}
for all $(x,a)\in X\times A$. Then
$$ \lim_{t\to\infty} Q_t(x,a)=Q(x,a)\quad\mathsf P_{z,\pi}\textrm{-a.s.}$$
\end{theorem}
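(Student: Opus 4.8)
The plan is to recast the $Q$-learning recursion as an asynchronous stochastic approximation scheme for the fixed point of a sup-norm contraction, and then to invoke the standard convergence lemma for such schemes (as in \cite{jaakkola1994,tsitsiklis1994}). Introduce the Bellman operator $H$ acting on functions $W:X\times A\to\mathbb R$ by
$$(HW)(x,a)=\sum_{y\in X}q(y|x,a)\Bigl(r(x,a,y)+\beta\max_{a'\in A}W(y,a')\Bigr),$$
so that the $Q$-function is characterized by $Q=HQ$. Since $|\max_{a'}W_1(y,a')-\max_{a'}W_2(y,a')|\le\max_{a'}|W_1(y,a')-W_2(y,a')|$, the operator $H$ is a $\beta$-contraction in the supremum norm $\|\cdot\|_\infty$. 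Writing $\gamma_t(x,a)=\alpha_t I_{\{x_t=x,a_t=a\}}$ and $\Delta_t=Q_t-Q$, and subtracting $Q(x,a)$ from the update rule, I obtain
$$\Delta_{t+1}(x,a)=(1-\gamma_t(x,a))\Delta_t(x,a)+\gamma_t(x,a)F_t(x,a),\qquad F_t(x,a)=r(x,a,x_{t+1})+\beta\max_{a'\in A}Q_t(x_{t+1},a')-Q(x,a).$$

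Next I would verify the three hypotheses of the stochastic approximation lemma. On the event $\{x_t=x,a_t=a\}$ the conditional law of $x_{t+1}$ given $\mathscr F_t$ is $q(\cdot|x,a)$, and $Q_t$ is $\mathscr F_t$-measurable, so there the relevant conditional expectation equals $\mathsf E_{z,\pi}[F_t(x,a)\mid\mathscr F_t]=(HQ_t)(x,a)-Q(x,a)=(HQ_t)(x,a)-(HQ)(x,a)$; by the contraction property it is bounded in absolute value by $\beta\|\Delta_t\|_\infty$. Because $X$ and $A$ are finite and $r$ is bounded, the noise has conditionally bounded variance, $\mathrm{Var}(F_t(x,a)\mid\mathscr F_t)\le C(1+\|\Delta_t\|_\infty^2)$ for some constant $C$. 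Finally, the Robbins--Monro conditions \eqref{2.1} supply $\sum_t\gamma_t(x,a)=\infty$ and $\sum_t\gamma_t^2(x,a)<\infty$ $\mathsf P_{z,\pi}$-a.s.; square-summability forces $\gamma_t(x,a)\to0$, so $0\le\gamma_t(x,a)<1$ for all large $t$, while the divergence of the first sum forces every pair $(x,a)$ to be updated infinitely often.

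With these facts in hand, the convergence lemma for asynchronous stochastic approximation driven by a sup-norm contraction yields $\Delta_t\to0$, that is $Q_t(x,a)\to Q(x,a)$ $\mathsf P_{z,\pi}$-a.s. for every $(x,a)$, which is the assertion.

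I expect the genuine difficulty to lie entirely in the stochastic approximation lemma itself, not in the reduction above. Two features make its proof delicate: the asynchrony, since at each $t$ only the single coordinate $(x_t,a_t)$ is updated while the contraction couples all coordinates through the $\max$, so the coordinates cannot be treated independently; and the need to control the martingale noise and the contractive drift simultaneously. The standard route is first to establish $\mathsf P_{z,\pi}$-a.s. boundedness of $\{\Delta_t\}$, and then to run an iterated contraction (sandwiching) argument: on an event $\{\|\Delta_s\|_\infty\le D \text{ for } s\ge t_0\}$ the accumulated martingale contribution is shown to vanish by the Robbins--Monro conditions via a Doob-type estimate, while the contractive drift pulls the bound from $D$ down towards $\beta D$; iterating the resulting bounds $D\mapsto\beta D+\varepsilon$ drives $\limsup_t\|\Delta_t\|_\infty$ to $0$.
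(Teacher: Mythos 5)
Your proposal is sound: recasting the $Q$-learning recursion as asynchronous stochastic approximation for the sup-norm $\beta$-contraction $H$, verifying the conditional-mean contraction bound, the variance bound, and the step-size conditions, and then invoking the convergence lemma is precisely the argument of \cite{jaakkola1994,tsitsiklis1994}. Note that the paper itself gives no proof of Theorem \ref{th:1} at all --- it explicitly recalls the result from those references and states that only conditions (\ref{2.1}) are studied here --- so your outline coincides with the intended (cited) proof, with the genuine technical work indeed residing, as you say, in the boundedness-plus-iterated-contraction proof of the stochastic approximation lemma rather than in the reduction.
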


In this paper we study only conditions (\ref{2.1}) and not the proof of Theorem \ref{th:1}. Under the assumption that each pair $(x,a)$ is visited infinitely often, one simple construction of the learning rate $\alpha_t$, depending on the local clock (\ref{1.3}) and satisfying (\ref{2.1}), was given is Section \ref{sec:1}. In the sequel we solely consider another version of a local clock, defined as the number of visits of a particular state $x$:
\begin{equation} \label{2.2}
 N_t(x)=\sum_{k=0}^t I_{\{x_k=x\}}.
\end{equation}
 
Assume that all states are visited infinitely often $\mathsf P_{z,\pi}$-a.s., the learning strategy satisfies the lower bound $\pi_t(a)\ge c(N_t)>0$, the learning rate is of the form $\alpha_t=1/\varphi(N_t)$ and
\begin{equation} \label{2.3}
 \sum_{t=1}^\infty\frac{c(t)}{\varphi(t)}=\infty,\qquad \sum_{t=1}^\infty\frac{1}{\varphi^2(t)}<\infty,
 \end{equation}
then the Robbins-Monro conditions (\ref{2.1}) are satisfied. 

Indeed, by the conditional Borel-Cantelli lemma \cite{meyer1972} (Chapter 1, Theorem 21), the first condition (\ref{2.1}) is satisfied if and only if
\begin{align} \label{2.4}
&\sum_{t=0}^\infty \mathsf E_{z,\pi}(\alpha_t I_{\{x_t=x,a_t=a\}}|\mathscr F_{t-1},x_t)=
\sum_{t=0}^\infty \frac{1}{\varphi(N_t)} I_{\{x_t=x\}}\mathsf E_{z,\pi}(I_{\{a_t=a\}}|\mathscr F_{t-1},x_t)\nonumber\\
=&\sum_{t=0}^\infty \frac{1}{\varphi(N_t)} I_{\{x_t=x\}}\pi_t(a)\ge\sum_{t=0}^\infty\frac{1}{\varphi(N_t)} I_{\{x_t=x\}}c(N_t)=
\sum_{j=1}^\infty \frac{c(j)}{\varphi(j)}=\infty 
\end{align}
$\mathsf P_{z,\pi}\textrm{-a.s.}$
For the second condition (\ref{2.1}) the argumentation is even easier: 
$$\sum_{t=0}^\infty \alpha_t^2 I_{\{x_t=x,a_t=a\}}\le \sum_{t=0}^\infty \frac{1}{\varphi^2(N_t)} I_{\{x_t=x\}}=\sum_{j=1}^\infty \frac{1}{\varphi^2(j)}<\infty \quad \mathsf P_{z,\pi}\textrm{-a.s.}$$

Note, that the decaying exploration is allowed, but the learning strategy should ensure infinitely many visits of every state and the lower bounds $c(t)$ should be consistent with learning rate: see the first condition (\ref{2.3}).

In the next section we allow an explicit dependence of $\alpha_t$ on the global clock $t$, but consider only persistent exploration learning strategies. Two main examples of persistent exploration learning strategies are 
\begin{itemize}
\item the Boltzmann exploration:
$$ \pi_t(a)=\frac{\exp(Q_t(x_t,a)/\tau)}{\sum_{a'}\exp(Q_t(x_t,a')/\tau)},\quad \tau>0.$$
The required inequality $\pi_t(a)\ge c>0$ follows from the boundedness of the sequence $(Q_t)$: see \cite{gosavi2006} for a simple proof.
\item $\varepsilon$-greedy exploration which takes a ``greedy'' action $a_t\in\arg\max Q_t(x_t,a_t)$ with probability $1-\varepsilon$ and a random action with probability $\varepsilon$.
\end{itemize}

\section{Robbins-Monro conditions for persistent exploration learning strategies}
\label{sec:3}
A distribution $a\mapsto g(a|x)$ on $A$, defined for all $x\in X$, is called a stationary randomized strategy. If $g(b(x)|x)=1$  for some function $b:X\mapsto A$, then the strategy is called deterministic. Such strategy can be identified with the function $b$. A stationary randomized strategy is called completely mixed if  $g(a|x)>0$ for all $x\in X$, $a\in A$. Any stationary randomized strategy $g$ induces a Markov chain with the transition matrix 
$$ P(g)(x,y)=\sum_{a\in A} q(y|x,a) g(a|x).$$

An MDP is called communicating (see \cite{bather1973,filar1988,kallenberg2002}), if for any $x, y \in X$ there exists a stationary deterministic strategy $g$ such that $y$ is accessible from $x$ in the Markov chain $P(g)$. In other words, there exists $n\in\mathbb N$, depending on $x, y$, such that $P^n(g)(x,y)>0$. We will use the fact that an MDP is communicating if and only $P(g)$ is irreducible for every completely mixed stationary randomized strategy: see \cite[Theorem 2.1]{filar1988}. 

Define  the completely mixed strategy $\overline g(a|x)=1/|A|$, where $|A|$ is the cardinality of $A$. Let us recall (see \cite[Lemma 7.3(i)]{behrends2000}) that a Markov chain $P(\overline g)$ is irreducible if and only if there exist $n\in\mathbb N$ such that the matrix $\sum_{j=1}^n P^j(\overline g)$ is strictly positive. Let $\delta>0$ be the minimal element of this matrix. Then
\begin{equation} \label{3.1}
\sum_{j=1}^n P^j(\overline g)(x,y)\ge\delta.
\end{equation}
\begin{lemma} \label{lem:1}
Assume that an MDP is communicating and the learning strategy $\pi$ ensures the persistent exploration: $\pi_t(a)\ge c>0$. Then for any function $f:X\mapsto [0,\infty)$ we have
\begin{equation} \label{3.2}
\sum_{j=1}^n \mathsf E_{z,\pi}(f(x_{t+j+1})|\mathscr F_t)\ge c^n |A|^n\delta\sum_{y\in X} f(y),
\end{equation}
where the constants $n$, $\delta$ satisfy (\ref{3.1}).
\end{lemma}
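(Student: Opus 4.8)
The plan is to compare the process generated by the learning strategy $\pi$ with the Markov chain $P(\overline g)$ driven by the uniform strategy, exploiting that persistent exploration forces the one-step transitions under $\pi$ to dominate a fixed multiple of $P(\overline g)$. The starting observation is that summing $\pi_t(a)\ge c$ over $a\in A$ gives $1\ge c|A|$, so $c|A|\le 1$ and, equivalently, $\pi_s(a)\ge c|A|\,\overline g(a|x)$ for every $a$, $x$. Combining this with the transition rule $\mathsf P_{z,\pi}(x_{s+1}=y|\mathscr F_{s-1},x_s)=\sum_{a}\pi_s(a)q(y|x_s,a)$, I obtain the one-step lower bound
$$\mathsf E_{z,\pi}(h(x_{s+1})|\mathscr F_{s-1},x_s)\ge c|A|\sum_{y\in X}P(\overline g)(x_s,y)h(y)$$
valid for every $s\ge 1$ and every $h:X\mapsto[0,\infty)$. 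The conditioning $\sigma$-algebra $\sigma(\mathscr F_{s-1},x_s)$ is essential: given $\mathscr F_t$ the action $a_t$ is already fixed, so the first transition with genuine exploration freedom is $x_{t+1}\to x_{t+2}$, which is exactly why the sum in (\ref{3.2}) runs over $x_{t+j+1}$, $j\ge 1$.

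The heart of the argument is to iterate this one-step estimate into a $j$-step estimate. I would prove by induction on $j\ge 1$ that, for every $s\ge 1$ and every nonnegative $f$,
$$\mathsf E_{z,\pi}(f(x_{s+j})|\mathscr F_{s-1},x_s)\ge (c|A|)^j\sum_{y\in X}P^j(\overline g)(x_s,y)f(y).$$
The base case $j=1$ is the one-step bound. For the inductive step I condition on the larger $\sigma$-algebra $\sigma(\mathscr F_s,x_{s+1})=\sigma(\mathscr F_{(s+1)-1},x_{s+1})$, apply the induction hypothesis at time $s+1$ to bound $\mathsf E(f(x_{s+j+1})|\mathscr F_s,x_{s+1})$ from below by $(c|A|)^j\sum_y P^j(\overline g)(x_{s+1},y)f(y)$, and then take the conditional expectation back down to $\sigma(\mathscr F_{s-1},x_s)$. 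Applying the one-step bound once more, now with $h(\cdot)=\sum_y P^j(\overline g)(\cdot,y)f(y)$, and using $P(\overline g)P^j(\overline g)=P^{j+1}(\overline g)$ produces the factor $(c|A|)^{j+1}$ and the kernel $P^{j+1}(\overline g)$, closing the induction.

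To finish, I take $s=t+1$ and sum the $j$-step estimate over $j=1,\dots,n$. Since $0<c|A|\le 1$, each coefficient satisfies $(c|A|)^j\ge (c|A|)^n=c^n|A|^n$, so I may pull $c^n|A|^n$ out of the sum. Interchanging the summations over $j$ and $y$ and invoking the irreducibility estimate (\ref{3.1}), $\sum_{j=1}^n P^j(\overline g)(x_{t+1},y)\ge\delta$, yields
$$\sum_{j=1}^n\mathsf E_{z,\pi}(f(x_{t+j+1})|\mathscr F_t,x_{t+1})\ge c^n|A|^n\delta\sum_{y\in X}f(y).$$
Because the right-hand side is a deterministic constant, applying $\mathsf E_{z,\pi}(\,\cdot\,|\mathscr F_t)$ to both sides and using the tower property replaces the conditioning by $\mathscr F_t$ and leaves the bound unchanged, giving exactly (\ref{3.2}).

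The main obstacle I anticipate is the bookkeeping of the conditioning $\sigma$-algebras: the model specifies the law of $a_s$ given $(\mathscr F_{s-1},x_s)$ rather than given $\mathscr F_s$, so the induction must be phrased with the intermediate $\sigma$-algebras $\sigma(\mathscr F_{s-1},x_s)$ to keep each Markov step clean, and one has to account for the fact that $x_{t+1}$ is itself random under $\mathscr F_t$ --- handled by proving the constant bound first conditionally on $\sigma(\mathscr F_t,x_{t+1})$ and only afterwards projecting onto $\mathscr F_t$. The genuinely MDP-specific input, namely that communication guarantees the uniform strategy is irreducible and hence that (\ref{3.1}) holds, enters only at the final summation step.
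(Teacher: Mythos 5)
Your proof is correct and follows essentially the same route as the paper: both arguments rest on the one-step domination $\sum_a\pi_s(a)q(y|x_s,a)\ge c|A|\,P(\overline g)(x_s,y)$ implied by persistent exploration, iterate it into a $j$-step bound with factor $(c|A|)^j\ge (c|A|)^n$, and then sum over $j=1,\dots,n$ and invoke (\ref{3.1}). The only cosmetic difference is that you run the induction conditionally on $\sigma(\mathscr F_t,x_{t+1})$ and project down to $\mathscr F_t$ at the end, whereas the paper peels transitions off the top while conditioning on $\mathscr F_t$ throughout and handles the step $x_t\to x_{t+1}$ by a minimum over the intermediate state.
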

\begin{proof} Put
$$ P^n(g) f(x)=\sum_{y\in X} P^n(g)(x,y) f(y),\quad n\ge 1.$$
Let $k\ge 2$, $f\ge 0$. Then
\begin{align*}
&\mathsf E_{z,\pi}(f(x_{t+k})|\mathscr F_t)=\sum_{x} f(x)\mathsf P_{z,\pi}(x_{t+k}=x|\mathscr F_t)\\
&=\sum_{x} f(x)\mathsf E_{z,\pi}(\mathsf P_{z,\pi}(x_{t+k}=x|\mathscr F_{t+k-1})|\mathscr F_t)\\
&=\sum_{x} f(x)\mathsf E_{z,\pi}(q(x|x_{t+k-1},a_{t+k-1})|\mathscr F_t)\\
&=\sum_{x} f(x)\mathsf E_{z,\pi}(\mathsf E_{z,\pi}(q(x|x_{t+k-1},a_{t+k-1})|\mathscr F_{t+k-2},x_{t+k-1})|\mathscr F_t)\\
&=\sum_{x} f(x)\mathsf E_{z,\pi}\left(\sum_a q(x|x_{t+k-1},a)\pi_{t+k-1}(a)|\mathscr F_t\right)\\
&\ge c \sum_{x} f(x)\mathsf E_{z,\pi}\left(\sum_a q(x|x_{t+k-1},a)|\mathscr F_t\right)\\
&= c |A| \sum_{x} f(x)\mathsf E_{z,\pi}\left(P(\overline g)(x_{t+k-1},x)|\mathscr F_t\right)\\
&=c|A|\mathsf E_{z,\pi}\left(P(\overline g) f(x_{t+k-1})|\mathscr F_t\right)\ge c^{k-1}|A|^{k-1}\mathsf E_{z,\pi}\left(P^{k-1}(\overline g) f(x_{t+1})|\mathscr F_t\right).
\end{align*}

It follows that
\begin{align*}
&\sum_{j=1}^n\mathsf E_{z,\pi}(f(x_{t+j+1})|\mathscr F_t)\ge \sum_{j=1}^n c^j |A|^j\mathsf E_{z,\pi}\left(P^j(\overline g) f(x_{t+1})|\mathscr F_t\right)\\
&=\sum_{j=1}^n c^j |A|^j\sum_{z}P^j(\overline g) f(z)q(z|x_t,a_t)\ge c^n |A|^n\sum_{z}\sum_{j=1}^n P^j(\overline g) f(z)q(z|x_t,a_t)\\
&\ge c^n |A|^n \min_z \sum_{j=1}^n P^j(\overline g) f(z)
=c^n  |A|^n\min_z \sum_y\sum_{j=1}^n P^j(\overline g)(z,y) f(y)\\
&\ge c^n |A|^n\delta\sum_{y} f(y),
\end{align*}
where we used the fact that $c\le 1/|A|$. 
\end{proof}

Under the assumptions of Lemma \ref{lem:1} every state $x\in X$ is visited infinitely often. It is even possible to give a lower bound for the growth rate of the local clock $N_t$. Namely, we claim that
\begin{equation} \label{3.2A}
 \liminf_{t\to\infty}\frac{N_t(x)}{t}\ge \frac{c^n|A|^n\delta}{n}\qquad \mathsf P_{z,\pi}\textrm{-a.s.}
\end{equation} 

To prove (\ref{3.2A}) let us represent $N_{kn+1}$, $k\ge 1$ in the form
$$ N_{kn+1}(x)=I_{\{x_0=x\}}+I_{\{x_1=x\}}+\sum_{j=1}^k\xi_j,\quad \xi_j=\sum_{l=(j-1)n+2}^{jn+1} I_{\{x_l=x\}}.$$
Furthermore, consider the Doob decomposition
$$\sum_{j=1}^k\xi_j=A_k+M_k,\quad k\ge 1$$
with respect to the filtration $\overline{\mathscr F}_k=\mathscr F_{kn}$, $k\ge 0$. Here $(A_k)$ is a predictable process (compensator):
$$ A_k=\sum_{j=1}^k\mathsf E_{z,\pi}(\xi_j|\overline{\mathscr F}_{j-1})$$
and $(M_k)$ is a martingale. By Lemma \ref{lem:1} we have
\begin{align*}
\mathsf E_{z,\pi}(\xi_j|\overline{\mathscr F}_{j-1})= \sum_{l=(j-1)n+2}^{jn+1} \mathsf E_{z,\pi}\left(I_{\{x_l=x\}}|\mathscr F_{(j-1)n}\right)\\
=\sum_{r=1}^{n}\mathsf E_{z,\pi}\left(I_{\{x_{(j-1)n+r+1}=x\}}|\mathscr F_{(j-1)n}\right)\ge c^n|A|^n\delta.
\end{align*}

It follows that $A_k\ge c^n|A|^n\delta k$. Furthermore,
$$ \frac{M_k}{k}\to 0,\quad k\to\infty\qquad \mathsf P_{z,\pi}\textrm{-a.s.}$$
by the law of large numbers for martingales: \cite[Chapter 7, \S3, Corollary 2]{shiryaev1996}. Thus,
\begin{equation} \label{3.2B}
\liminf_{k\to\infty}\frac{N_{kn+1}(x)}{k}\ge c^n|A|^n\delta\qquad \mathsf P_{z,\pi}\textrm{-a.s.}
\end{equation}
For any $t\in\mathbb N$ there exists a unique $k\in\mathbb N$ such that $t\in[kn,(k+1)n)$. So, the inequality (\ref{3.2A}) easily follows from (\ref{3.2B}):
\begin{align*}
\liminf_{t\to\infty}\frac{N_t(x)}{t}&\ge\liminf_{k\to\infty}\frac{N_{kn}(x)}{(k+1)n}=\liminf_{k\to\infty}\frac{N_{(k+2)n}(x)}{(k+3)n}\\
&\ge \liminf_{k\to\infty}\frac{N_{(k+1)n+1}(x)}{k(1+3/k)n}\ge\frac{c^n|A|^n\delta}{n} \qquad \mathsf P_{z,\pi}\textrm{-a.s.}
\end{align*}

In Theorem \ref{th:2}, which is the main result of this note, the learning rate will be determined by a function $\varphi:\mathbb N\times \mathbb N\mapsto (0,\infty)$. Assume that
\begin{itemize}
\item[(i)] the functions $t\mapsto\varphi(t,j)$, $j\mapsto\varphi(t,j)$ are non-decreasing;
\item[(ii)] the function $\varphi$ satisfies the Robbins-Monro conditions on the diagonal:
\begin{equation} \label{3.3A}
\sum_{t=1}^\infty\frac{1}{\varphi(t,t)}=\infty,\qquad \sum_{t=1}^\infty\frac{1}{\varphi^2(t,t)}<\infty.
\end{equation}
\end{itemize}

\begin{theorem}\label{th:2}
Assume that the MDP is communicating and $\varphi$ satisfies conditions (i), (ii) above. Then  the learning rate $\alpha_t=\varphi(t,N_t)$ satisfies the Robbins-Monro conditions (\ref{2.1}) for a persistent exploration learning strategy: $\pi_t(a)\ge c>0$.
\end{theorem}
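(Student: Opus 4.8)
The plan is to verify the two Robbins-Monro conditions (2.1) separately, in both cases reducing the sum over iterations $t$ to a sum over the successive visit times $t_j=t_j(x)$ of the fixed state $x$. Since on $\{x_t=x\}$ the local clock satisfies $N_t=N_t(x)$ and $N_{t_j}(x)=j$, the contribution of $x$ to either series becomes a series in $j$ evaluated along the ``local diagonal'' $(t_j,j)$. The whole point is then to compare $(t_j,j)$ with the genuine diagonal $(t,t)$, on which the Robbins-Monro conditions (3.3A) are assumed; the bridge between the global clock $t_j$ and the local clock $j$ is the almost sure linear growth estimate (3.2A), which is exactly where the communicating assumption and the persistent exploration enter.

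For the square-summability condition I would first drop the action indicator, $\sum_t \varphi^{-2}(t,N_t) I_{\{x_t=x,a_t=a\}}\le \sum_t \varphi^{-2}(t,N_t) I_{\{x_t=x\}}=\sum_{j\ge 1}\varphi^{-2}(t_j,j)$. Since distinct visits occur at distinct times, $t_j\ge j-1$, and the monotonicity (i) in both arguments gives $\varphi(t_j,j)\ge\varphi(j-1,j-1)$, whence the series is dominated by $\sum_{i\ge 0}\varphi^{-2}(i,i)<\infty$ by the second condition in (3.3A). This direction needs no probabilistic input beyond the trivial bound $t_j\ge j-1$.

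The divergence condition is the substantial one. As in the computation (2.4) I would apply the conditional Borel-Cantelli lemma to replace $\sum_t \varphi^{-1}(t,N_t) I_{\{x_t=x,a_t=a\}}$ by its conditional expectations given $(\mathscr F_{t-1},x_t)$; the terms are uniformly bounded by $1/\varphi(1,1)$ because $\varphi$ is non-decreasing, so the lemma applies. Using $\pi_t(a)\ge c$ this reduces the problem to showing $\sum_{j\ge 1}\varphi^{-1}(t_j,j)=\infty$ almost surely. Here I would invoke (3.2A): fixing $\varepsilon\in(0,\rho)$ with $\rho=c^n|A|^n\delta/n$, almost surely $N_t\ge(\rho-\varepsilon)t$ for all large $t$, which evaluated at $t=t_j$ (where $N_{t_j}=j$) yields $t_j\le Cj$ for all large $j$, with the deterministic constant $C=1/(\rho-\varepsilon)$. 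Setting $C'=\lceil C\rceil$ and using monotonicity twice, $\varphi(t_j,j)\le\varphi(C'j,j)\le\varphi(C'j,C'j)$, so it suffices to prove $\sum_j \varphi^{-1}(C'j,C'j)=\infty$.

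The final, and most delicate, step is this sparse-diagonal divergence. Writing $a_t=1/\varphi(t,t)$, the sequence $(a_t)$ is non-increasing because $\varphi(t,t)$ is non-decreasing, and $\sum_t a_t=\infty$ by (3.3A). For the integer $C'$ a standard condensation estimate, $C'a_{C'j}\ge\sum_{i=1}^{C'}a_{C'j+i}$, summed over $j$ gives $C'\sum_j a_{C'j}\ge\sum_{t\ge 1}a_t=\infty$, so $\sum_j a_{C'j}=\infty$ as required. The main obstacle throughout is precisely the mismatch between the global and local clocks: a priori $t_j$ could grow arbitrarily faster than $j$, pushing $\varphi(t_j,j)$ so high that the harmonic-type divergence is destroyed; it is the linear lower bound (3.2A) that caps this growth at a constant multiple, after which the robustness of a divergent non-increasing series under sub-sampling closes the argument. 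Dropping the finitely many initial $j$ below the (random) threshold where $t_j\le Cj$ fails is harmless, since it does not affect divergence.
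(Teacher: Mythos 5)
Your proof is correct, and its second half (square-summability) is essentially identical to part (b) of the paper's proof: drop the action indicator, reindex by the visit times, and use $t_j\ge j-1$ together with monotonicity to dominate the series by $\sum_i \varphi^{-2}(i,i)<\infty$. The divergence half, however, takes a genuinely different route. The paper never uses the growth estimate (\ref{3.2A}) in its proof of Theorem \ref{th:2}: instead it groups the terms $\gamma_l$ into blocks of length $n$, applies the conditional Borel--Cantelli lemma to the block sums with respect to the coarser filtration $\overline{\mathscr F}_j=\mathscr F_{jn}$, and bounds each block's conditional expectation below by $c^{n+1}|A|^n\delta/\varphi(jn+1,jn+1)$ directly from inequality (\ref{3.2}) of Lemma \ref{lem:1}; divergence along the progression $jn+1$ then follows from monotonicity, by the same subsampling observation you use at the end. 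You instead apply the conditional Borel--Cantelli lemma term by term with respect to $\sigma(\mathscr F_{t-1},x_t)$ (exactly as in the paper's computation (\ref{2.4})), reduce to showing $\sum_j 1/\varphi(t_j,j)=\infty$ almost surely, and then argue pathwise: the estimate (\ref{3.2A}) caps $t_j\le Cj$ past a random threshold, monotonicity in both arguments moves you to the sparse diagonal $(C'j,C'j)$, and the condensation estimate $C'a_{C'j}\ge\sum_{i=1}^{C'}a_{C'j+i}$ for the non-increasing sequence $a_t=1/\varphi(t,t)$ closes the argument. The trade-off: your version cleanly separates the probabilistic input (linear growth of the local clock, which is where communication and persistent exploration enter) from a purely deterministic series manipulation, and it handles the random threshold correctly by discarding finitely many terms; but it leans on (\ref{3.2A}), whose proof in the paper requires the Doob decomposition and the martingale law of large numbers, so your argument is indirectly heavier than the paper's self-contained block argument, which needs only Lemma \ref{lem:1} and the conditional Borel--Cantelli lemma, and whose lower bound for each block is uniform in $j$ rather than eventual and pathwise. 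Both routes are valid proofs of conditions (\ref{2.1}) under hypotheses (i), (ii) of (\ref{3.3A}).
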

\begin{proof} (a) Let us check the first property (\ref{2.1}). We will use the notation (\ref{1.0}). By the conditional Borel-Cantelli lemma the series 
$$\gamma_0+\gamma_1+\sum_{j=1}^k\zeta_j,\qquad \zeta_j=\sum_{l=(j-1)n+2}^{jn+1} \gamma_l$$
diverges $\mathsf P_{z,\pi}$-a.s. if and only if
\begin{equation} \label{3.3}
\sum_{j=1}^\infty \mathsf E_{z,\pi}(\zeta_j|\overline{\mathscr F}_{j-1})=\infty\qquad \mathsf P_{z,\pi}\textrm{-a.s.},
\end{equation}
where $\overline{\mathscr F}_j=\mathscr F_{jn}$. Using the monotonicity properties of $\varphi$ and the inequality (\ref{3.2}), we get
\begin{align*}
\mathsf E_{z,\pi}(\zeta_j|\overline{\mathscr F}_{j-1})&=\sum_{l=(j-1)n+2}^{jn+1} \mathsf E_{z,\pi}\left(\gamma_l|\mathscr F_{(j-1)n}\right)\\
&=\sum_{l=(j-1)n+2}^{jn+1} \mathsf E_{z,\pi}\left(\frac{1}{\varphi(l,N_l)}I_{\{x_l=x\}}\mathsf E_{z,\pi}(I_{\{a_l=a\}}|\mathscr F_{l-1},x_l)|\mathscr F_{(j-1)n}\right)\\
&\ge c \sum_{l=(j-1)n+2}^{jn+1}\mathsf E_{z,\pi}\left(\frac{1}{\varphi(l,l)}I_{\{x_l=x\}}|\mathscr F_{(j-1)n}\right)\\
&\ge \frac{c}{\varphi(jn+1,jn+1)}\sum_{l=(j-1)n+2}^{jn+1}\mathsf E_{z,\pi}\left(I_{\{x_l=x\}}|\mathscr F_{(j-1)n}\right)\\
&\ge \frac{c}{\varphi(jn+1,jn+1)}\sum_{r=1}^{n} \mathsf E_{z,\pi}\left(I_{\{x_{(j-1)n+r+1}=x\}}|\mathscr F_{(j-1)n}\right)\\
&\ge \frac{c^{n+1}|A|^n\delta}{\varphi(jn+1,jn+1)}.
\end{align*}
So, to proof (\ref{3.3}), and hence the first relation (\ref{2.1}), it is enough to show that
$$\sum_{j=1}^\infty \frac{1}{\varphi(jn+1,jn+1)}=\infty.$$
But it is clear, since $\varphi(jn+1,jn+1)\le\varphi(jn+k,jn+k)$, $k=1,\dots,n$ and
$$\infty=\sum_{t=1}^\infty\frac{1}{\varphi(t,t)}\le \sum_{j=0}^\infty\frac{n}{\varphi(jn+1,jn+1)}.$$

(b) Denote by $\tau_j(x)$ the time of $j$-th visit, $j\ge 1$ of the point $x$ by the sequence $(x_t)$. Then $N_{\tau_j}(x)=j$
and 
\begin{align*}
\sum_{t=0}^\infty \frac{1}{\varphi^2(t,N_t)} I_{\{x_t=x,a_t=a\}}&\le \sum_{t=0}^\infty \frac{1}{\varphi^2(t,N_t)} I_{\{x_t=x\}}\\
&=\sum_{j=1}^\infty \frac{1}{\varphi^2(\tau_j(x),j)}\le \sum_{j=1}^\infty \frac{1}{\varphi^2(j,j)}
\end{align*}
since $\tau_j(x)\ge j$ and the function $t\mapsto\varphi(t,j)$ is non-decreasing. Thus, the second condition (\ref{2.1}) is implied by the second condition (\ref{3.3A}). 
\end{proof}

For instance, the learning rates
$$ \varphi(t,N_t)=\frac{a_1}{(b_1+t)^\alpha} \frac{a_2}{(b_2+N_t)^\beta},\quad \alpha+\beta\in (1/2,1],\quad a_i,b_i,\alpha,\beta>0,$$
$$ \varphi(t,N_t)=\frac{a_1}{(b_1+\ln t)^\alpha} \frac{a_2}{(b_2+N_t)^\beta},\quad \alpha\in (1/2,1],\quad \beta\in [1/2,1],\quad a_i,b_i>0$$
satisfy the conditions of Theorem \ref{th:2}.

For the learning rate depending only on the global clock: 
$$\gamma_t=\frac{1}{\varphi(t)}I_{\{x_t=x,a_t=a\}} $$
Theorem \ref{th:2} partially confirms the mentioned conjecture of Bradtke:
$$  \sum_{t=1}^\infty \frac{1}{\varphi(t)}=\infty\quad\Longrightarrow\quad \sum_{t=1}^\infty \frac{1}{\varphi(t)}I_{\{x_t=x,a_t=a\}}=\infty $$
for finite state-action communicating MDP, persistent exploration learning strategies and non-decreasing functions $\varphi$. 

It would be interesting to investigate the case of decaying exploration learning strategies. It is clear that the Robbins-Monro conditions (\ref{2.1}) can be ensured only by joint conditions on the learning rate and the randomized learning strategy $(\pi_t)$. A simple illustration was given by (\ref{2.4}).

\subsubsection*{Acknowledgments.} The research is supported by the Russian Science Foundation, project 17-19-01038. 

%
%

\end{document}